\newcommand{\colmatindex}[1]{\begin{turn}{-80}\mbox{\color{blue}\small#1}\end{turn}}
\newcommand{\rowmatindex}[1]{\begin{turn}{-5}\mbox{\color{blue}\scriptsize#1}\end{turn}}
\newcommand{\demi}{\parbox{0.2cm}{\scriptsize 1/2}}
\newcommand{\loss}{\square}
\newcommand{\tie}{\diamond}
\newcommand{\win}{\blacksquare}
\newtheorem{mydef}{Definition}
\begin{document}

\title{Utlity-based Dueling Bandits as a  Partial Monitoring Game}
\author{\name Pratik Gajane \email pratik.gajane@inria.fr\\
       \addr INRIA SequeL team, Lille 59650, France\\
       Orange labs, Lannion 22300, France\\
       \AND
       \name Tanguy Urvoy  \email tanguy.urvoy@orange.com \\
       \addr Orange labs,
       Lannion 22300,
       France}
\editor{}
\maketitle
\begin{abstract}
Partial monitoring is a generic framework for sequential decision-making with incomplete feedback. It encompasses a wide class of problems such as dueling bandits,  learning with expect advice, dynamic pricing, dark pools, and label efficient prediction. We study the utility-based dueling bandit problem as an instance of partial monitoring problem and prove that it fits the time-regret partial monitoring hierarchy as an {\it easy} -- i.e. $\tilde{\Theta}\left(\sqrt{T}\right)$ -- instance.
We survey some partial monitoring algorithms and see how they could be used to solve dueling bandits efficiently.
\end{abstract}

\begin{keywords}
Online learning, Dueling Bandits, Partial Monitoring, Partial Feedback, Multiarmed Bandits
\end{keywords}

\section{Introduction}
Partial Monitoring (PM) provides a generic mathematical model for sequential decision-making with incomplete feedback. It is a recent paradigm in the reinforcement learning community. Similarly the multi-armed bandit problem is a classical mathematical model for the \textit{exploration/exploitation} dilemma inherent in reinforcement learning \citep[see][]{DBLP:journals/ftml/BubeckC12}. The \textit{K-armed dueling bandit problem} \citep{Yue:2009:IOI:1553374.1553527} is a variation of the multi-armed bandit problem where two arms are selected at each round with a relative feedback.


 Several generic partial monitoring algorithms have been proposed for both stochastic and adversarial settings \citep[see][for details]{DBLP:journals/mor/BartokFPRS14}.
 With the exception of  {\sc globalexp3} \cite{Bartok2013} which tries to capture the structure of the games  more finely, these algorithms only focus on the time bound and perform inefficiently in term of the number of actions.
 As we show in section \ref{PM_algorithms}, for a dueling bandit problem, the number of actions is quadratic in the number of arms $K$ and these algorithms, including {\sc globalexp3}, provide at best a $\tilde{\mathcal{O}}\left(K\sqrt{T}\right)$ regret guarantee whereas a dedicated algorithm like {\sc rex3} \citep{GajaneICML2015} can provide a $\tilde{\mathcal{O}}\left(\sqrt{KT}\right)$ guarantee\footnote{The $\tilde{\mathcal{O}}\left(\cdot\right)$ notation hides logarithmic factors.}.
Studying partial monitoring algorithms from the perspective of dueling bandits is hence an interesting and challenging problem which could help us improve the ability of PM algorithms to capture the structure of sequential decision problems in a better way.

In this preliminary work, we investigate how a utility-based dueling bandits problem can be modeled as an instance of a partial monitoring game. Our main contribution is that, we prove, using the PM formalism, that it is an 
{\it easy PM  instance} according to the hierarchy defined in \cite{DBLP:journals/mor/BartokFPRS14}.
Furthermore, we take a brief look at the existing partial monitoring algorithms and examine how they could be used to solve dueling bandits problems efficiently.

\subsection{Dueling bandits}
The K-armed dueling bandit problem is a variation of the classical multi-armed bandit
problem introduced by \cite{Yue:2009:IOI:1553374.1553527} 
to formalize the exploration/exploitation dilemma in learning from preference feedback.
In its utility-based formulation, at each time period, 
the environment sets a bounded value for each of the $K$ arms and simultaneously the learner selects two arms. The learner only sees the outcome of the \textit{duel} between the selected arms (i.e. the feedback indicates which of the selected arms has better value) and receives
the average of the gains of the selected arms. The goal of the learner is to maximize her \textit{cumulative gain}.

Relative feedback is naturally suited to many practical applications because users are more obliging to provide a relative preference feedback rather than an absolute feedback e.g. compared to ``I rate Tennis at $32/50$ and Football at $48/50$" (absolute feedback) , it's easier for users to say ``I like Football more than Tennis" (relative feedback).
Information Retrieval systems with \textit{implicit feedback} are another important application of the dueling bandits \citep[see][]{RadlinskiJ07}.
The major difficulty of the dueling bandit problem is that the learner cannot directly observe the loss (or gain) of the selected actions. To capture this aspect of the problem, it can be modeled as an instance of the \textit{partial monitoring problem} as defined by \cite{conf/colt/PiccolboniS01}.

\subsection{Partial monitoring games}
A partial monitoring game is defined by a tuple $\left(\boldsymbol{N},\boldsymbol{M},\boldsymbol{\Sigma}, \mathcal{L}, \mathcal{H}\right)$ \footnote{Uppercase boldface letters are used to denote sets} where 
$\boldsymbol{N}$, $\boldsymbol{M}$, and $\mathbf{\Sigma}$ are the {\it action} set, the {\it outcome} set, and the {\it feedback alphabet} respectively.
To each action $I\in \boldsymbol{N}$ and outcome $J\in\boldsymbol{M}$, the {\it loss function} 
$\mathcal{L}$ associates a real-valued loss $\mathcal{L}(I,J)$ and the 
{\it feedback function} $\mathcal{H}$ associates
a feedback symbol $\mathcal{H}(I,J) \in \boldsymbol{\Sigma}$.

\paragraph{}
In every round, the opponent and the learner simultaneously choose an outcome $J_t$ from $\boldsymbol{M}$ and an action $I_t$ from $\boldsymbol{N}$, respectively. The learner then suffers the loss $\mathcal{L}(I_t, J_t)$ and receives the feedback $ \mathcal{H}(I_t, J_t)$. Only the feedback is revealed to the learner, the outcome and the loss remain hidden. In some problems, gain $\mathcal{G}$ is considered instead of loss. The loss function $\mathcal{L}$ and the feedback function $\mathcal{H}$ are known to the learner. When both $\boldsymbol{N}$ and $\boldsymbol{M}$ are finite, the loss function and the feedback function can be encoded by matrices, namely loss matrix and feedback matrix each of size $|\boldsymbol{N}| \times |\boldsymbol{M}|$.
The aim of the learner is to control the expected cumulative regret against the best single-action (or pure) strategy at time $T$:

$$ R_T=\max\limits_i\sum_{t=1}^T \mathcal{L}(I_t,J_t) - \mathcal{L}(i,J_t) $$

Various interesting problems can be modeled as partial monitoring games, such as learning with expect advice (\cite{Littlestone1994}), the
multi-armed bandit problem (\cite{auer2002nonstochastic}), dynamic pricing (\cite{conf/focs/KleinbergL03}), the dark pool problem (\cite{AgarwalBD10}), label efficient prediction (\cite{Cesa-bianchi05}), and linear and convex optimization with full or bandit feedback (\cite{Zinkevich03}, \cite{AbernethyHR08}, \cite{cs-LG-0408007}). We shall briefly explain a couple of examples: 
%
\paragraph{The dynamic pricing problem:}
A seller has a product to sell and the customers wish to buy it. 
At each time period, the customer secretly decides on a maximum amount she is willing to pay and the seller sets a selling price.
If the selling price is below the maximum amount the buyer is willing to pay, she buys the product and the seller's gain is the selling price she fixed. If the selling price is too expensive, her gain is zero.
The feedback is partial because the seller only recieves a binary information stating whether the customer has bought the product or not.
A PM formulation of this problem is provided below:
$$x\in \boldsymbol{N} \subseteq \mathbb{R}, \quad y\in \boldsymbol{M} \subseteq \mathbb{R} , \quad \mathbf{\Sigma} = \{\text{``sold"}, \text{``not sold"}\}$$
\begin{center}
\begin{tabular}{r@{\quad}l}
$\begin{aligned}
\mathcal{G}(x,y) = \begin{cases}
 		 0, & \text{if } x  > y, \\
		  x, & \text{if } x \leq y,
		\end{cases}
\end{aligned}$ 
 \quad \quad
$\begin{aligned}
\mathcal{H}(x,y) = \begin{cases}
 		 \text{``not sold"}, & \text{if } x  > y, \\
 		 \text{``sold"}, & \text{if } x \leq y,
		\end{cases}
\end{aligned}$
\end{tabular}
\end{center}

\paragraph{The multi-armed bandit problem:}
At each time period, the learner pulls one of the $K$ arms and receives it's
corresponding gain which is bounded in $[0,1]$. The learner sees only her gain and not the
gain of other arms. The learner's goal is to win almost as much as the optimal arm.
A partial monitoring  formulation of this problem is provided with
a set of K arms/actions $i\in \boldsymbol{N} = \{1, \dots, K\}$, an alphabet $\mathbf{\Sigma} = [0,1]$, and a set of 
environment outcomes which are vectors\footnote{Lowercase boldface letters are used to denote vectors} $\boldsymbol{m} \in \boldsymbol{M}= [0,1]^K$. The entry with index $i$ ($\boldsymbol{m}_i$) denotes the instantaneous gain of the $i^{th}$ arm.
Assuming binary gains, $\boldsymbol{M}$ is finite and of size $2^K$.
\begin{center}
\begin{tabular}{r@{\quad}l}
$\begin{aligned}
\mathcal{G}(i,\boldsymbol{m}) = \boldsymbol{m}_i
\end{aligned}
$\quad \quad$
\begin{aligned}
\mathcal{H}(i,\boldsymbol{m}) = \boldsymbol{m}_i
\end{aligned}$
\end{tabular}
\end{center}


\section{Dueling bandits as a Partial Monitoring game}
\label{sec:DBasPM}

The utility-based dueling bandits model is similar to multi-armed bandits but the action sets differ.
An action consists here of selecting a pair $(i,j)$ of arms. However, symmetric actions like $(i,j)$ and $(j,i)$ lead to the same gains and provide equally informative feedback.
Hence the action set for the learner can be restricted to
$\boldsymbol{N} = \left\{(i,j): 1 \leq i,j \leq K, i \leq j \right\}$.
When the environment selects an outcome $\boldsymbol{m} \in \boldsymbol{M}$  and the learner selects a duel/action $(i,j) \in \boldsymbol{N}$, the instantaneous gain $\mathcal{G}((i,j),\boldsymbol{m}) $ and feedback $\mathcal{H}((i,j),\boldsymbol{m})$ are as follows: \\

\begin{tabular}{cc}
$\begin{aligned}
 \mathcal{G}((i,j),\boldsymbol{m}) = \frac{\boldsymbol{m}_i+ \boldsymbol{m}_j}{2}
\end{aligned}$ 
 \quad \quad
$\begin{aligned}
\mathcal{H}((i,j),\boldsymbol{m}) = \begin{cases}
  \loss & \text{if } \boldsymbol{m}_i < \boldsymbol{m}_j \quad\text{\it (loss)}\\
  \tie & \text{if } \boldsymbol{m}_i = \boldsymbol{m}_j \quad\text{\it (tie)}\\
  \win  & \text{if } \boldsymbol{m}_i > \boldsymbol{m}_j \quad\text{\it (win)}
	\end{cases} \\
\end{aligned}$ 
\end{tabular}

\paragraph{}
To illustrate this formalism, we encode a $4$-armed binary-gain dueling bandit problem as a PM problem in Figure~\ref{fig:matrices}.  The first element of every column is of the form ${\color{blue} \boldsymbol{m}_1 \boldsymbol{m}_2 \boldsymbol{m}_3 \boldsymbol{m}_4}$ where $\boldsymbol{m}_i$ is the gain for $i^{th}$ arm. The first element of every row is of the form ${\color{blue}d_1d_2}$ where $d_1$ is the first arm being picked and $d_2$ being the second.

\begin{figure}
\[
\mathcal{G}=
\arraycolsep=4pt
\def\arraystretch{1.05}
\begin{array}{cccccccccccccccccc}
&\colmatindex{0000} &\colmatindex{0001} &\colmatindex{0010} 
&\colmatindex{0011} &\colmatindex{0100} &\colmatindex{0101} &
\colmatindex{0110} &\colmatindex{0111} &\colmatindex{1000} &
\colmatindex{1001} &\colmatindex{1010} &\colmatindex{1011} &
\colmatindex{1100} &\colmatindex{1101} &\colmatindex{1110} &
\colmatindex{1111}\\[+0.5cm]
\rowmatindex{11}& 0 & 0 & 0 & 0 & 0 & 0 & 0 & 0 & 1 & 1 & 1 & 1 &1 &1 &1 &1 \\
\rowmatindex{12}& 0 & 0 & 0 & 0 &\demi &\demi &\demi &\demi &\demi &\demi &\demi &\demi &1 &1 &1 &1 \\
\rowmatindex{13}& 0 & 0 &\demi &\demi & 0 & 0 &\demi &\demi &\demi &\demi &1 &1 &\demi &\demi &1 &1 \\
\rowmatindex{14}& 0 &\demi & 0 &\demi & 0 &\demi & 0 &\demi &\demi &1 &\demi &1 &\demi &1 &\demi &1 \\
\rowmatindex{22}& 0 & 0 & 0 & 0 & 1 & 1 & 1 & 1 & 0 & 0 & 0 & 0 &1 &1 &1 &1 \\
\rowmatindex{23}& 0 & 0 &\demi &\demi &\demi &\demi &1 &1 & 0 & 0 &\demi &\demi &\demi &\demi &1 &1 \\
\rowmatindex{24}& 0 &\demi & 0 &\demi &\demi &1 &\demi &1 & 0 &\demi & 0 &\demi &\demi &1 &\demi &1 \\
\rowmatindex{33}& 0 & 0 & 1 & 1 & 0 & 0 & 1 & 1 & 0 & 0 & 1 & 1 &0 &0 &1 &1 \\
\rowmatindex{34}& 0 &\demi &\demi &1 & 0 &\demi &\demi &1 & 0 &\demi &\demi &1 & 0 &\demi &\demi &1 \\ 
\rowmatindex{44}& 0 & 1 & 0 & 1 & 0 & 1 & 0 & 1 & 0 & 1 & 0 & 1 &0 &1 &0 &1 \\
\end{array}
\]
\[
\mathcal{H}=
\arraycolsep=4pt
\def\arraystretch{1.05}
\begin{array}[c]{cccccccccccccccccc}
&\colmatindex{0000} &\colmatindex{0001} &\colmatindex{0010} 
&\colmatindex{0011} &\colmatindex{0100} &\colmatindex{0101} &
\colmatindex{0110} &\colmatindex{0111} &\colmatindex{1000} &
\colmatindex{1001} &\colmatindex{1010} &\colmatindex{1011} &
\colmatindex{1100} &\colmatindex{1101} &\colmatindex{1110} &
\colmatindex{1111}\\[+0.5cm]
\rowmatindex{11}&\tie &\tie &\tie &\tie &\tie &\tie &\tie &\tie & \tie & \tie & \tie & \tie &\tie &\tie &\tie &\tie \\
\rowmatindex{12}&\tie & \tie & \tie & \tie & \loss & \loss & \loss & \loss & \win & \win & \win & \win & \tie & \tie & \tie & \tie \\
\rowmatindex{13}&\tie & \tie & \loss & \loss & \tie & \tie & \loss & \loss & \win & \win & \tie & \tie & \win & \win & \tie & \tie \\
\rowmatindex{14}&\tie & \loss & \tie & \loss & \tie & \loss & \tie & \loss & \win & \tie & \win & \tie & \win & \tie & \win & \tie \\
\rowmatindex{22}&\tie &\tie &\tie &\tie &\tie &\tie &\tie &\tie & \tie & \tie & \tie & \tie &\tie &\tie &\tie &\tie \\
\rowmatindex{23}&\tie & \tie & \loss & \loss & \win & \win & \tie & \tie & \tie & \tie & \loss & \loss & \win & \win & \tie & \tie \\
\rowmatindex{24}&\tie & \loss & \tie & \loss & \win & \tie & \win & \tie & \tie & \loss & \tie & \loss & \win & \tie & \win & \tie \\
\rowmatindex{33}&\tie &\tie &\tie &\tie &\tie &\tie &\tie &\tie & \tie & \tie & \tie & \tie &\tie &\tie &\tie &\tie \\
\rowmatindex{34}&\tie & \loss & \win & \tie & \tie & \loss & \win & \tie & \tie & \loss & \win & \tie & \tie & \loss & \win & \tie \\
\rowmatindex{44}&\tie &\tie &\tie &\tie &\tie &\tie &\tie &\tie & \tie & \tie & \tie & \tie &\tie &\tie &\tie &\tie \\
\end{array}
\]

\caption{Gain matrix $\mathcal{G}$ and feedback matrix $\mathcal{H}$ for a $4$-armed binary dueling bandits resulting in $10$ non-duplicate actions and $16$ possible outcomes.}
\label{fig:matrices}
\end{figure}

\begin{figure}
\[
\mathcal{S}_{(12)}=
\arraycolsep=4pt
\def\arraystretch{1.05}
\begin{array}[c]{cccccccccccccccccc}
&\colmatindex{0000} &\colmatindex{0001} &\colmatindex{0010} 
&\colmatindex{0011} &\colmatindex{0100} &\colmatindex{0101} &
\colmatindex{0110} &\colmatindex{0111} &\colmatindex{1000} &
\colmatindex{1001} &\colmatindex{1010} &\colmatindex{1011} &
\colmatindex{1100} &\colmatindex{1101} &\colmatindex{1110} &
\colmatindex{1111}\\[+0.5cm]
\rowmatindex{$\loss$}&0 & 0 & 0 & 0 & 1 & 1 & 1 & 1 & 0 & 0 & 0 & 0 & 0 & 0 & 0 & 0 \\
\rowmatindex{$\tie$}&1 & 1 & 1 & 1 & 0 & 0 & 0 & 0 & 0 & 0 & 0 & 0 & 1 & 1 & 1 & 1 \\
\rowmatindex{$\win$}&0 & 0 & 0 & 0 & 0 & 0 & 0 & 0 & 1 & 1 & 1 & 1 & 0 & 0 & 0 & 0 \\
\end{array}
\]
\caption{Signal matrix for action $(12)$ for the same problem as in Figure~\ref{fig:matrices}.}
\label{fig:smatrix}
\end{figure}

\section{Hierarchy and basic concepts of partial monitoring problems}
\label{sec:hierarchy}
In this section, firstly, we take a brief review of the basic concepts of partial monitoring problems. Most of the definitions in this section are taken from \cite{Bartók11minimaxregret} and \cite{Bartok2013}.

Consider a finite partial monitoring game with action set $\boldsymbol{N}$, outcome set $\boldsymbol{M}$, loss matrix $\mathcal{L}$ and feedback matrix $\mathcal{H}$.
For any action $i \in \boldsymbol{N}$, loss vector $\boldsymbol{l}_i$ denotes the column vector consisting of $i^{th}$ row in $\mathcal{L}$. Correspondingly, gain vector $\boldsymbol{g}_i$ denotes the column vector consisting of $i^{th}$ row in $\mathcal{G}$. For the rest of the article, gain vector $\boldsymbol{g}_i$ and loss vector $\boldsymbol{l}_i$ will be used interchangeably depending upon the setting. Let $\Delta_{|\boldsymbol{M}|}$ be the $|\boldsymbol{M}|\!-\!1$-dimensional probability simplex i.e. 
$\Delta_{|\boldsymbol{M}|} =  \left\{ \boldsymbol{q} \in [0,1]^{|\boldsymbol{M}|}\ |\  || \boldsymbol{q}||_1 = 1 \right\}$. For any outcome sequence of length $T$, the vector $\boldsymbol{q}$ denoting the relative frequencies with which each outcome occurs is in $\Delta_{|\boldsymbol{M}|}$.  The cumulative loss of action $i$ for this outcome sequence can hence be described as follows:
$$ \sum_{t=1}^{T} \mathcal{L}(i, J_t) =  T \cdot  \boldsymbol{l}_i^\intercal \boldsymbol{q}$$ \\
The vectors denoting the outcome frequencies can be thought of as the opponent strategies. These opponent strategies determine which action is optimal i.e. the action with the lowest cumulative loss. This induces a \textit{cell decomposition} on $\Delta_{|\boldsymbol{M}|}$. 
\begin{mydef}[Cells]
The cell of an action $i$ is defined as 
$$ C_i = \left\{  \boldsymbol{q} \in \Delta_{|\boldsymbol{M}|}\ |\ \boldsymbol{l}_i^\intercal \boldsymbol{q} = \min_{j \in \boldsymbol{M}}  \boldsymbol{l}_j^\intercal \boldsymbol{q} \right\}
$$ 
\end{mydef}
In other words, a cell of an action consists of those opponent strategies in the probability simplex for which it is the optimal action.
An action $i$ is said to be \textit{Pareto-optimal} if there exists an opponent strategy $\boldsymbol{q}$ such
that the action $i$ is optimal under $\boldsymbol{q}$. 
The actions whose cells have a positive $(|\boldsymbol{M}| - 1)$-dimensional volume are called \textit{Strongly Pareto-optimal}. 
Actions that are Pareto-optimal but not strongly Pareto-optimal are called \textit{degenerate}.
\begin{mydef}[Cell decomposition]
The cells of strongly Pareto-optimal actions form a finite cover of $\Delta_M$ called as the {\it cell-decomposition}.
\end{mydef}

Two actions cells $i$ and $j$ from the cell decomposition are {\it neighbors} if their intersection is an $(|\boldsymbol{M}| - 2)$-dimensional polytope. The actions corresponding to these cells are also called as {\it neighbors}. 
The raw feedback matrices can be `standardized' by encoding their symbols in {\it signal matrices}:
%
%
%
\begin{mydef}[Signal matrices]
\label{def:signal}
For an action $i$, let $\sigma_1, \dots, \sigma_{s_i} \in \Sigma$ be the symbols occurring in row $i$ of $\mathcal{H}$.
The signal matrix $\mathcal{S}_i$ of action $i$ is defined as the incidence matrix of symbols and outcomes i.e.
$\mathcal{S}_i(k,m) = \llbracket{}\mathcal{H}(i, m) = \sigma_k\rrbracket{} \quad k = 1, \dots, {s_i}, \quad\text{for }  m \in \boldsymbol{M}$ \footnote{we use $\llbracket{}\cdot\rrbracket{}$ to denote the indicator function}.
\end{mydef}
{\it Observability} is a key notion to assess the difficulty of a PM problem in terms of regret $R_T$ against best action at time $T$.

\begin{mydef}[Observability] For actions $i$ and $j$, we say that 
$\boldsymbol{l}_i - \boldsymbol{l}_j $
is \emph{globally observable} if 
$\boldsymbol{l}_i - \boldsymbol{l}_j  \in \operatorname{Im} \mathcal{S}^\intercal$.
Where the global signal matrix $\mathcal{S}$ is obtained by stacking all signal matrices.
Furthermore, if $i$ and $j$ are neighboring actions, then $\boldsymbol{l}_i - \boldsymbol{l}_j $ is called \emph{locally observable} if
$\boldsymbol{l}_i - \boldsymbol{l}_j  \in \operatorname{Im} \mathcal{S}_{i,j}^\intercal$ where the local signal matrix $\mathcal{S}_{i,j}$ is obtained by stacking the signal matrices of all neighboring actions for $i,j$: $\mathcal{S}_{k}$ for $k\in \left\{ k \in \boldsymbol{N}\ |\ C_i \cap C_j  \subseteq C_k \right\}$.
\label{obs}
\end{mydef}

\begin{theorem}[Classification of partial monitoring problems] 
Let $\left(\boldsymbol{N},\boldsymbol{M},\boldsymbol{\Sigma}, \mathcal{L},\mathcal{H}
\right)$ be a partial monitoring game.
Let $ \{ C_1, \dots, C_k\}$ be it's cell decomposition, with corresponding loss vectors $\boldsymbol{l}_1, \dots, \boldsymbol{l}_k$.
The game falls into the following four regret categories. 
\begin{itemize}
\item $R_T = 0$ if there exists an action with $C_i = \Delta_{|\boldsymbol{M}|}$. This case is called \emph{trivial}. 
\item $R_T \in \Theta(T)$ if there exist two strongly Pareto-optimal actions $i$ and $j$ such that $\boldsymbol{l}_i - \boldsymbol{l}_j$ is not globally observable. This case is called \emph{hopeless}.
\item $R_T \in \tilde\Theta(\sqrt T)$ if it is not trivial and for all pairs of (strongly Pareto-optimal) neighboring actions $i$ and $j$, $\boldsymbol{l}_i - \boldsymbol{l}_j$ is locally observable. This case is called \emph {easy}.
\item $R_T \in \Theta(T^{2/3})$ if $\mathscr{G}$ is not hopeless and there exists a pair of neighboring actions $i$ and $j$ such that  $\boldsymbol{l}_i - \boldsymbol{l}_j$ is not locally observable. This case is called \emph {hard}. 
\end{itemize}
\label{thm1}
\end{theorem}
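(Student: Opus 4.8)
The plan is to handle the four cases separately, each time proving matching upper and lower bounds, and to isolate the role of the cell decomposition and the two observability conditions. The \emph{trivial} case is immediate: if some $C_i = \Delta_{|\boldsymbol{M}|}$, then action $i$ minimizes $\boldsymbol{l}_i^\intercal \boldsymbol{q}$ for every opponent strategy $\boldsymbol{q}$, so committing to it from round one yields $R_T = 0$. For the \emph{hopeless} case I would establish the $\Omega(T)$ lower bound by an information-theoretic indistinguishability argument. If $\boldsymbol{l}_i - \boldsymbol{l}_j \notin \operatorname{Im}\mathcal{S}^\intercal$ for two strongly Pareto-optimal actions $i,j$, there is a direction in $\Delta_{|\boldsymbol{M}|}$ along which the relative loss of $i$ and $j$ changes sign but which is annihilated by the global signal matrix, hence invisible in the feedback of \emph{every} action. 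Choosing two opponent strategies $\boldsymbol{q}\in C_i$ and $\boldsymbol{q}'\in C_j$ whose difference lies in this hidden direction produces identical feedback laws for all actions, so any learner behaves identically against both and must pay linear regret against at least one. The complementary $o(T)$ guarantee for non-hopeless games follows because global observability lets every pairwise loss difference be written as a linear combination of expected signals, and thus be estimated consistently from the observed feedback.

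For the \emph{easy} case the crux is the $\tilde{O}(\sqrt{T})$ upper bound. Local observability of every neighboring pair means $\boldsymbol{l}_i - \boldsymbol{l}_j \in \operatorname{Im}\mathcal{S}_{i,j}^\intercal$, so there exist coefficient vectors yielding an \emph{unbiased} estimator of the relative loss between neighbors built solely from the feedback of actions adjacent to the shared boundary $C_i \cap C_j$. The key point is that these observing actions can be played with probability bounded away from zero, so the estimator variance is bounded independently of $T$. Feeding these estimators into an exponential-weights / mirror-descent scheme run over the cells of the decomposition (as in the algorithm of Bart\'ok et al.) gives regret controlled by the estimator variance, hence $\tilde{O}(\sqrt{T})$. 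The matching $\Omega(\sqrt{T})$ lower bound holds for any non-trivial game: near the shared facet of two neighboring strongly Pareto-optimal cells I would embed a two-point stochastic bandit instance with gap $\Delta$, where distinguishing the cells requires $\sim 1/\Delta^2$ samples and optimizing over $\Delta$ forces $\Omega(\sqrt{T})$.

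For the \emph{hard} case I would combine a forced-exploration upper bound with a $\Omega(T^{2/3})$ lower bound. Since the game is non-hopeless, global observability still permits estimation of every loss difference, but now some neighboring pair can only be resolved using observations from ``distant'' actions lying outside both cells; scheduling these observations on a sparse exploration schedule of frequency $\sim T^{-1/3}$ balances estimation error against the instantaneous regret paid during exploration, giving $\tilde{O}(T^{2/3})$. The lower bound is the hardest step: localizing around a neighboring pair $i,j$ with $\boldsymbol{l}_i - \boldsymbol{l}_j \notin \operatorname{Im}\mathcal{S}_{i,j}^\intercal$, I would construct two environments supported near $C_i \cap C_j$ and argue via a change-of-measure (KL / Pinsker) bound on the feedback process that separating them forces the learner to play non-boundary actions a number of times growing like $T^{2/3}$, each such play costing constant regret.

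I expect this last $\Omega(T^{2/3})$ lower bound to be the main obstacle, because it requires quantifying precisely the cost of \emph{non-local} observation: one must show that the only way to gather the information separating two neighboring cells is to pay instantaneous regret, and that the optimal exploration/exploitation tradeoff sits exactly at the $T^{2/3}$ scale. This rests on the structural lemma that local observability is equivalent to $\boldsymbol{l}_i - \boldsymbol{l}_j$ lying in the span of the signals of the boundary-adjacent actions, which is what cleanly partitions the non-hopeless, non-trivial regime into the \emph{easy} and \emph{hard} classes.
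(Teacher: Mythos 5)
You should first be aware that the paper contains no proof of this theorem: it is imported verbatim from the literature (the classification theorem of Bart\'ok, Foster, P\'al, Rakhlin and Szepesv\'ari, 2014, with the cell/observability definitions of Bart\'ok et al.), so your proposal can only be compared against that external proof. Measured against it, your roadmap is the correct architecture --- trivial case by commitment, hopeless case by an invisible-direction indistinguishability argument, easy case by local estimation of neighboring loss differences fed into a low-regret scheme, hard case by forced exploration at rate $T^{-1/3}$ against a change-of-measure lower bound --- but two steps are genuinely broken or missing, not merely compressed. In the hopeless case, from $\boldsymbol{l}_i - \boldsymbol{l}_j \notin \operatorname{Im}\mathcal{S}^\intercal$ you obtain $v \in \ker \mathcal{S}$ with $(\boldsymbol{l}_i-\boldsymbol{l}_j)^\intercal v \neq 0$, but you still must exhibit $\boldsymbol{q} \in C_i$ and $\boldsymbol{q}' \in C_j$ with $\boldsymbol{q}'-\boldsymbol{q}$ proportional to such a $v$; that the invisible direction actually crosses the boundary between the two cells is not automatic and is where strong Pareto-optimality (positive-volume cells) and a chaining-through-neighbors argument do real work. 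Your sketch asserts the existence of this pair rather than proving it.

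The more serious gap is in the easy case: your claim that the boundary-adjacent observing actions ``can be played with probability bounded away from zero, so the estimator variance is bounded independently of $T$'' would, taken literally, be fatal --- playing possibly suboptimal observing actions with constant probability already incurs $\Omega(T)$ regret, and backing the exploration rate off to restore sublinear regret is exactly the mechanism that produces the $T^{2/3}$ rate of {\sc feedexp3}-style forced exploration. The actual proofs ({\sc cbp} in the stochastic setting, {\sc NeighborhoodWatch} in the adversarial one) instead couple the probability of playing an observing action to the probability mass the algorithm currently places on the neighborhood that action serves, so the importance-weighted estimator's variance is paid for by regret the algorithm is already suffering; local observability is precisely the structural condition making this coupling possible, since only actions adjacent to the shared facet $C_i \cap C_j$ are needed. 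Without this coupling your exponential-weights analysis does not close. Finally, the $\Omega(T^{2/3})$ lower bound --- which you correctly identify as the main obstacle and explicitly defer --- is the central technical contribution of the classification theorem: it requires constructing a randomized environment pair localized at $C_i \cap C_j$ and a KL-based argument that any information separating them costs constant instantaneous regret per observation, and your proposal names the right tools but supplies none of the construction. As it stands, the proposal is a faithful map of the known proof, not a proof.
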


\section{Dueling bandits in the partial monitoring hierarchy}
This section examines the place of the dueling bandit problem in the hierarchy of partial monitoring problems described above.
Note that the existence of the {\sc rex3} algorithm  \citep{GajaneICML2015} with a $\tilde{\Theta}\left(\sqrt{KT}\right)$ regret guarantee is enough to state that dueling bandit is an {\it easy game} according to the hierarchy described in 
 Theorem~\ref{thm1}, but our aim here is to retrieve this result from the PM machinery.

\begin{theorem}[Duelings bandits: locally observable] In a binary utility-based dueling bandit problem with more than two arms, all the pairs of actions are locally observable. 
\label{mainthm}
\end{theorem}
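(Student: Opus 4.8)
The plan is to first make the cell decomposition completely explicit, and then, for every neighboring pair, to exhibit a single ``comparison'' action whose dueling feedback already encodes the required loss difference.

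First I would rewrite everything in terms of per-arm expected gains. For an opponent strategy $\boldsymbol{q} \in \Delta_{|\boldsymbol{M}|}$ write $\mu_i = \boldsymbol{g}_{(i,i)}^\intercal \boldsymbol{q}$ for the expected value of arm $i$; since $\mathcal{G}((i,j),\boldsymbol{m}) = (\boldsymbol{m}_i + \boldsymbol{m}_j)/2$, the expected gain of an action $(i,j)$ is $(\mu_i + \mu_j)/2$. Because $(\mu_i+\mu_j)/2 \le \max(\mu_i,\mu_j) \le \max_k \mu_k$, the optimum over all actions is always attained at a diagonal action $(i^\star,i^\star)$ with $i^\star \in \arg\max_k \mu_k$. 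Hence the only strongly Pareto-optimal actions are the diagonal ones $(i,i)$, with full-dimensional cells $C_{(i,i)} = \{\boldsymbol{q} : \mu_i = \max_k \mu_k\}$, while every off-diagonal action $(i,j)$ with $i<j$ is at best degenerate (optimal only on the measure-zero set $\mu_i = \mu_j = \max_k \mu_k$). The cell decomposition therefore consists exactly of the $K$ diagonal cells.

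Next I would pin down the neighbor relation and isolate the key containment. Two diagonal cells $C_{(i,i)}$ and $C_{(j,j)}$ meet along $C_{(i,i)} \cap C_{(j,j)} = \{\boldsymbol{q} : \mu_i = \mu_j = \max_k \mu_k\}$, the generic $(|\boldsymbol{M}|-2)$-dimensional face, so this is indeed a neighboring pair. The crucial observation is that the comparison action $(i,j)$ is optimal on this shared face: there $(\mu_i+\mu_j)/2 = \mu_i = \max_k \mu_k$, so $C_{(i,i)} \cap C_{(j,j)} \subseteq C_{(i,j)}$. By the definition of the local signal matrix, $(i,j)$ then belongs to the index set $\{k \in \boldsymbol{N} : C_{(i,i)} \cap C_{(j,j)} \subseteq C_k\}$, so $\mathcal{S}_{(i,j)}$ is one of the blocks stacked into $\mathcal{S}_{i,j}$ and $\operatorname{Im} \mathcal{S}_{(i,j)}^\intercal \subseteq \operatorname{Im} \mathcal{S}_{i,j}^\intercal$.

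Finally I would close the loop with a one-line computation. The gain difference of the two neighbors has, at outcome $\boldsymbol{m}$, the entry $\boldsymbol{g}_{(i,i)} - \boldsymbol{g}_{(j,j)} = \boldsymbol{m}_i - \boldsymbol{m}_j$, which for binary gains equals $+1,0,-1$ precisely according to whether $\boldsymbol{m}_i > \boldsymbol{m}_j$, $\boldsymbol{m}_i = \boldsymbol{m}_j$, or $\boldsymbol{m}_i < \boldsymbol{m}_j$ --- exactly the three rows $\win,\tie,\loss$ of $\mathcal{S}_{(i,j)}$ (cf. Figure~\ref{fig:smatrix}). Thus with the coefficient vector $\boldsymbol{v} = (-1,0,1)^\intercal$ indexed by $(\loss,\tie,\win)$ one gets $\mathcal{S}_{(i,j)}^\intercal \boldsymbol{v} = \boldsymbol{g}_{(i,i)} - \boldsymbol{g}_{(j,j)}$, so the difference lies in $\operatorname{Im} \mathcal{S}_{(i,j)}^\intercal \subseteq \operatorname{Im} \mathcal{S}_{i,j}^\intercal$ and the pair is locally observable (the sign convention relating $\boldsymbol{l}$ and $\boldsymbol{g}$ is irrelevant, since an image is a subspace). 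I expect the main obstacle to be not this last computation but the bookkeeping of the first two steps: rigorously proving that the diagonal actions are the \emph{only} cells and that the comparison action genuinely covers the shared face, because local (as opposed to merely global) observability hinges entirely on that containment. The hypothesis of more than two arms is what I would invoke to guarantee this non-degenerate diagonal cell structure, so that the game is neither trivial nor collapses, and I would check the small-arm edge cases separately.
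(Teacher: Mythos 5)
Your proof is correct, but it takes a genuinely different --- and in one respect more careful --- route than the paper's. The paper never computes the cell decomposition: it runs a case analysis over \emph{arbitrary} pairs of actions, showing $\boldsymbol{g}_{(i,k)} - \boldsymbol{g}_{(k,j)} = 0.5\left(S_{(i,j)}^\win - S_{(i,j)}^\loss\right)$ when the two actions share an arm, and telescoping through an intermediate action $(i,j')$ when they share none; it then asserts, without verification, that the witnessing signal matrices belong to ``the signal matrix of the neighborhood action set.'' You instead first pin down the cell decomposition --- only the diagonal actions $(i,i)$ are strongly Pareto-optimal, every off-diagonal action being degenerate --- so the only pairs to which the local-observability Definition~\ref{obs} formally applies are the diagonal neighbors, and you explicitly prove the containment $C_{(i,i)} \cap C_{(j,j)} \subseteq C_{(i,j)}$ that licenses stacking $\mathcal{S}_{(i,j)}$ into the local signal matrix $\mathcal{S}_{i,j}$; your closing identity $\mathcal{S}_{(i,j)}^\intercal(-1,0,1)^\intercal = \boldsymbol{g}_{(i,i)} - \boldsymbol{g}_{(j,j)}$ is the same win-minus-loss trick as the paper's equation~\eqref{eqn1arm}, just applied to the diagonal pair. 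What each approach buys: the paper's case analysis matches the theorem's literal ``all pairs'' phrasing (including degenerate--degenerate pairs such as $(i,k)$ vs.\ $(k,j)$), whereas your argument covers exactly the strongly Pareto-optimal neighbors --- which is all that the \emph{easy} criterion of Theorem~\ref{thm1} and the corollary actually require --- while closing the neighborhood-membership gap the paper leaves implicit. Two small remarks: your hedging about $K>2$ is unnecessary, since the diagonal cell structure and the containment hold verbatim for $K=2$ (the hypothesis matters only for the paper's no-common-arm case, which needs four distinct arms, and even there the pair $(1,1),(2,2)$ is handled directly); and if you wanted the paper's full ``all pairs'' claim, your containment argument extends immediately to degenerate pairs, e.g.\ $C_{(i,k)} \cap C_{(k,j)} \subseteq C_{(i,j)}$ holds by the same one-line computation on the face $\mu_i = \mu_k = \mu_j = \max_l \mu_l$.
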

\begin{proof}
Consider a dueling bandit problem as defined in Section~\ref{sec:DBasPM} with binary gains and $K>2$ arms. The signal matrix of any action $(i,j) \in \boldsymbol{N}$ is defined as follows:
\begin{center}
\begin{tabular}{ccc}
$S_{(i,j)}(\loss, \boldsymbol{m}) = \llbracket{}\boldsymbol{m}_i < \boldsymbol{m}_j \rrbracket{}
$,\ 
&
$S_{(i,j)}(\tie, \boldsymbol{m}) =
\llbracket{}\boldsymbol{m}_i = \boldsymbol{m}_j \rrbracket{}
$,\ 
&
$S_{(i,j)}(\win, \boldsymbol{m}) =
\llbracket{}\boldsymbol{m}_i > \boldsymbol{m}_j \rrbracket{}
$
\end{tabular}
\end{center}

\noindent
In the following, we show that for any pair of actions $(i,j)$ and $(i',j')$, $\textbf{\textit{g}}_{(i',j')} - \textbf{\textit{g}}_{(i',j')}$ is locally observable. For the sake of readability, let's consider $S^\win$, $S^\tie$ and $S^\loss$ to be the column vectors containing the rows pertaining to the symbols $\win$, $\tie$ and $\loss$ of the signal matrix $S$ respectively. We consider the following two cases for the pair of actions which together cover all the possibilities:
\begin{itemize}
\item A pair of actions that share at-least one common arm: 
	\begin{enumerate}
		\item Actions $(i,k)$ and $(k,j)$. For any binary gain outcome $\boldsymbol{m}$, we have :
		\begin{align} 
			\boldsymbol{g}_{(i,k)} - \boldsymbol{g}_{(k,j)} &= \left(\frac{\boldsymbol{m}_i + \boldsymbol{m}_k}{2} - \frac{\boldsymbol{m}_k + \boldsymbol{m}_{j}}{2}\right)_{\boldsymbol{m} \in \boldsymbol{M}} \nonumber \\
			&=   0.5 \left( \llbracket{}\boldsymbol{m}_i > \boldsymbol{m}_j \rrbracket{} - \llbracket{}\boldsymbol{m}_j > \boldsymbol{m}_i \rrbracket{} \right)_{\boldsymbol{m} \in \boldsymbol{M}} \nonumber \\
			& = 0.5 \left( S_{(i,j)}^\win -  S_{(i,j)}^\loss \right)
		\label{eqn1arm}
		\end{align} 
 So, $\boldsymbol{g}_{(i,k)} - \boldsymbol{g}_{(k,j)}$ falls in the row space of the signal matrix of the action $(i,j)$ and hence in the row space of the signal matrix of the neighborhood action set. (refer definition \ref{obs})

		\item Actions $(i,k)$ and $(j,k)$. Similarly, $\boldsymbol{g}_{(i,k)} - \boldsymbol{g}_{(j,k)}$ = $ 0.5    S_{(i,j)}^\win - 0.5  S_{(i,j)}^\loss $.
	\end{enumerate}
\item No common arm ($i \neq i' \neq  j \neq j'$): In this case, 
\begin{align*}
\begin{split}
\boldsymbol{g}_{(i,j)} - \boldsymbol{g}_{(i',j')} &= \boldsymbol{g}_{(i,j)} - \boldsymbol{g}_{(i,j')} + \boldsymbol{g}_{(i,j')}  - \boldsymbol{g}_{(i',j')} \\
& = 0.5 \left(  S_{(j,j')}^\win -  S_{(j,j')}^\loss + S_{(i,i')}^\win -  S_{(i,i')}^\loss \right)  \quad \quad  \quad \quad \text{Using equation \eqref{eqn1arm}}
\end{split}
\end{align*}
\end{itemize}
\noindent
Hence, for any pair of actions $(i,j)$ and $(i',j')$, $\boldsymbol{g}_{(i,j)} - \boldsymbol{g}_{(i',j')}$ falls in the row space of the signal matrix of the neighborhood action set i.e. $\boldsymbol{g}_{(i,j)} - \boldsymbol{g}_{(i',j')} \in \operatorname{Im} S^\intercal_{((i,j)(i',j'))}$ and therefore it is locally observable. So, by extension, the binary dueling bandit problem is locally observable and hence we arrive at the following corollary.
\end{proof}
\begin{corollary}
According to the hierarchy described in theorem \ref{thm1}, the binary dueling bandit problem is \emph{easy} and its regret is $ \tilde\Theta(\sqrt T)$.
\end{corollary}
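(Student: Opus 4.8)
The plan is to carry all the work on the linear algebra of a single comparison action and to let the binary structure of the gains do the heavy lifting. The engine is the elementary identity, valid precisely because $\boldsymbol{m}_i,\boldsymbol{m}_j\in\{0,1\}$,
$$ \boldsymbol{m}_i - \boldsymbol{m}_j = \llbracket \boldsymbol{m}_i > \boldsymbol{m}_j \rrbracket - \llbracket \boldsymbol{m}_i < \boldsymbol{m}_j \rrbracket = S^{\win}_{(i,j)}(\boldsymbol{m}) - S^{\loss}_{(i,j)}(\boldsymbol{m}), $$
which identifies the difference of two arm values, read coordinate-by-coordinate over $\boldsymbol{M}$, with the win-row minus the loss-row of the comparison action $(i,j)$. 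I would state this first, because every gain-vector difference in the game is an average of such per-arm differences and therefore lands in the row space of one or two signal matrices.

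Before invoking it I would determine the cell decomposition, which is where the genuinely \emph{local} content lives. Since the learner is allowed to duel an arm against itself, $\boldsymbol{g}_{(a,a)}$ has the $\boldsymbol{q}$-value $\mu_a:=\boldsymbol{g}_{(a,a)}^{\intercal}\boldsymbol{q}$, and $\tfrac{\mu_a+\mu_b}{2}\le\max(\mu_a,\mu_b)$ for every pair; hence the strongly Pareto-optimal actions are exactly the diagonal actions $(a,a)$, each with the full-dimensional cell ``arm $a$ is best'', while every off-diagonal action is degenerate. For $K>2$ two diagonal cells $C_{(a,a)}$ and $C_{(b,b)}$ meet along the facet $\{\mu_a=\mu_b\ge\mu_c\ \forall c\}$, so the neighboring strongly Pareto-optimal pairs are precisely $\{(a,a),(b,b)\}$. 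The crucial observation is that on this facet the comparison action $(a,b)$ is itself optimal, its gain being $\mu_a=\mu_b$, so $C_{(a,a)}\cap C_{(b,b)}\subseteq C_{(a,b)}$ and $S_{(a,b)}$ is therefore one of the matrices stacked into the local signal matrix $\mathcal{S}_{(a,a),(b,b)}$.

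With these two pieces the conclusion is immediate: the identity gives $\boldsymbol{g}_{(a,a)}-\boldsymbol{g}_{(b,b)} = S^{\win}_{(a,b)} - S^{\loss}_{(a,b)} \in \operatorname{Im} S_{(a,b)}^{\intercal} \subseteq \operatorname{Im} \mathcal{S}_{(a,a),(b,b)}^{\intercal}$, which is exactly local observability for the neighboring pair. Feeding this into Theorem~\ref{thm1} together with non-triviality (no single action is optimal on all of $\Delta_{|\boldsymbol{M}|}$) yields the corollary that the game is easy with $R_T\in\tilde\Theta(\sqrt T)$. For completeness I would also record global observability for an arbitrary pair: a shared-arm difference such as $\boldsymbol{g}_{(i,k)}-\boldsymbol{g}_{(k,j)}=\tfrac12\big(S^{\win}_{(i,j)}-S^{\loss}_{(i,j)}\big)$ already sits in a single signal row space, and a disjoint pair telescopes as $\boldsymbol{g}_{(i,j)}-\boldsymbol{g}_{(i',j')}=(\boldsymbol{g}_{(i,j)}-\boldsymbol{g}_{(i,j')})+(\boldsymbol{g}_{(i,j')}-\boldsymbol{g}_{(i',j')})$ into two such spaces.

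I expect the main obstacle to be the middle step rather than the algebra. The identity trivially places each difference in some signal row space, but that only certifies global observability; the content of the ``easy'' case in Theorem~\ref{thm1} is the sharper local version, and making it rigorous forces one to pin down which actions are strongly Pareto-optimal, which pairs are genuine neighbors, and --- most delicately --- to verify the containment $C_{(a,a)}\cap C_{(b,b)}\subseteq C_{(a,b)}$ so that the comparison action legitimately belongs to the neighborhood action set. Matching each loss difference to the signal matrices of actions whose cells contain the relevant intersection, rather than to an arbitrary convenient action, is the step I would write most carefully.
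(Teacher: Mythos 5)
Your proposal is correct, and its algebraic engine is the same as the paper's proof of Theorem~\ref{mainthm}: the binary-gain identity $\boldsymbol{m}_a-\boldsymbol{m}_b=\llbracket\boldsymbol{m}_a>\boldsymbol{m}_b\rrbracket-\llbracket\boldsymbol{m}_a<\boldsymbol{m}_b\rrbracket$, i.e.\ a win-row-minus-loss-row of the comparison action's signal matrix, plus telescoping through a shared arm for disjoint pairs. Where you genuinely diverge is in how this is fed into the hierarchy, and your route is the more rigorous one. The paper claims local observability for \emph{all} pairs of actions and, after landing each gain difference in the row space of a convenient signal matrix $S_{(i,j)}$, simply asserts that this is ``hence in the row space of the signal matrix of the neighborhood action set''---it never computes the cell decomposition nor checks that $(i,j)$ belongs to $\left\{k\ \middle|\ C_{(i,k)}\cap C_{(k,j)}\subseteq C_k\right\}$ as Definition~\ref{obs} requires. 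You instead pin down the cell decomposition first (diagonal actions $(a,a)$ are the strongly Pareto-optimal ones with full-dimensional cells $\{\mu_a\ge\mu_c\ \forall c\}$, off-diagonal comparisons are degenerate, and the neighboring pairs are exactly the diagonal ones meeting on $\{\mu_a=\mu_b\ge\mu_c\}$), and then explicitly verify the containment $C_{(a,a)}\cap C_{(b,b)}\subseteq C_{(a,b)}$ via the observation that on that facet $(a,b)$ has optimal gain $(\mu_a+\mu_b)/2=\mu_a$; this legitimizes stacking $S_{(a,b)}$ into the local signal matrix. Since local observability is only defined for neighbors, and only the neighboring strongly Pareto-optimal pairs matter for the \emph{easy} case of Theorem~\ref{thm1}, your restriction to diagonal pairs is exactly the right scope rather than a loss of generality, and your explicit non-triviality check is likewise needed for the corollary and is tacit in the paper. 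For what it is worth, the paper's unproven containment step does hold in the shared-arm case (optimality of both $(i,k)$ and $(k,j)$ forces $\mu_i=\mu_k=\mu_j$ to be maximal, hence $(i,j)$ is also optimal there), so the two arguments reach the same conclusion---yours supplies precisely the verification the paper glosses over, at the cost of first establishing the structure of the cell decomposition, while the paper's broader all-pairs sweep buys a (looser) statement covering non-neighboring and degenerate pairs as well, which you correctly relegate to a global-observability remark.
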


\section{Partial monitoring algorithms and their use for dueling bandits}\label{PM_algorithms}
{\sc FEEDEXP3} by \cite{conf/colt/PiccolboniS01} was the first algorithm for finite partial monitoring games. For its application, there is an important pre-condition -- existence of a matrix $\mathcal{B}$ such that $\mathcal{B} \mathcal{H}  = \mathcal{G}$. We prove by contradiction that such a matrix $\mathcal{B}$ doesn't exist for the dueling bandit problem. 
Let's assume $\mathcal{B}$ exists. Therefore, for any action $(i,j) \in \boldsymbol{N}$ and any outcome vector 
$\boldsymbol{m} \in  \boldsymbol{M}$,
\[ \mathcal{G}((i,j),\boldsymbol{m}) = \sum_{i',j'=1}^{K} \mathcal{B}_{((i,j)(i',j'))} \cdot \mathcal{H}_{((i',j')(\boldsymbol{m}))}   \]
Consider $ \boldsymbol{m}=0\dots0$,
i.e. the gain of every arm is $0$. In this case, the gain of any action $(i,j)$ is $0$ and the feedback for every action is $\tie$, therefore 
\begin{equation}
 0 = \sum_{i',j'=1}^{K} \mathcal{B}_{((i,j)(i',j'))} \cdot \tie 
\label{eq2}
\end{equation}
Now consider $ \boldsymbol{m}=1\dots1$, i.e. the gain of every arm is $1$. In this case, the gain of any action $(i,j)$ is $1$ and feedback of every action is $\tie$, therefore 
\begin{equation}
1 = \sum_{i',j'=1}^{K} \mathcal{B}_{((i,j)(i',j'))} \cdot \tie
\label{eq3}
\end{equation}
Eq. \ref{eq2} and eq. \ref{eq3} reach a contradiction, therefore our assumption that $\mathcal{B}$ exists is incorrect.
\noindent
Fortunately, the authors also provide a general algorithm which performs several matrices transformations to sidestep this pre-condition. These transformations are studied thoroughly in \citep{Bartok2012phd}.

{\sc BALATON} by \cite{Bartók11minimaxregret}, {\sc CBP}-vanilla and {\sc CBP} by \cite{Bartok2012phd} belong to the family of algorithms for the locally observable PM games as does {\sc GLOBAL-EXP3} by \cite{Bartok2013}. Although, for {\sc GLOBAL-EXP3}, its regret bound of $\mathcal{\tilde{O}}(\sqrt{N'T})$ does not directly depend on the number of actions, but rather on the structure of games as $N'$ is the size of the largest \emph{point-local game}. We can however provide a counter-example for utility-based dueling bandits where $N' \approx K^2$ in the following way.

We use the notations from \cite{Bartok2013}. Consider a $p$ in the probability simplex $\Delta_{|M|}$ where all the arms have maximal gains. For this $p$, all the actions are optimal therefore this point belongs to all the cells in the cell-decomposition. Hence, according to definition 6 in \cite{Bartok2013}, there exists a point-local game consisting of all the $K(K+1)/2$ non-duplicate actions. Therefore the upper bound of GLOBALEXP3 translates to $\tilde{O}(K\sqrt{T})$ for utility-based dueling bandits.

The following table summarizes the salient features of these PM algorithms. We can clearly see that none of them, except {\sc REX3}, is optimal with respect to the number of actions $N$. Please note that for the dueling bandits problem, $N$ $\approx$  $K^2$.  
\begin{table}[h]
\begin{center}
\caption{Summary of PM algorithms}
\label{tab:taxonomy}
\begin{tabular} {c@{\quad}lc@{\quad}lc@{\quad}lc}
\bf Algorithm & \bf Setting & \bf Optimality & \bf Regret \\ \Xhline{4\arrayrulewidth}
{\sc FEEDEXP3} (\cite{conf/colt/PiccolboniS01}) & Adversarial & Not in $T$ or $N$ & $\mathcal{\tilde{O}}(T^{2/3} K)$\\ \hline
{\sc BALATON} (\cite{Bartók11minimaxregret})& Stochastic &  Not in $T$ or $N$  & $\mathcal{\tilde{O}}(K\sqrt{T})$ \\ \hline
{\sc CBP} (\cite{Bartok2012phd}) & Stochastic & in $T$, not in $N $ &  $\mathcal{\tilde{O}}(K^2 logT)$\\ \hline
{\sc GLOBAL-EXP3} (\cite{Bartok2013}) & Adversarial & in $T$, not in $N$ & $\mathcal{\tilde{O}}(K\sqrt{T})$\\ \hline
{\sc SAVAGE} (\cite{urvoy13}) & Stochastic &  in $T$, not in $N$ & $\mathcal{O}(K^2 logT)$\\ \hline
{\sc Neighborhood Watch} (\cite{Foster2011}) & Adversarial & in $T$, not in $N$ & $\mathcal{\tilde{O}}(K\sqrt{T})$\\ \hline
{\sc REX3} (\cite{GajaneICML2015}) & Adversarial & in $T$ and $N$ & $\mathcal{\tilde{O}}(\sqrt{KT})$ \\
 \Xhline{4\arrayrulewidth}
\end{tabular}
\end{center}
\end{table}

\section{Conclusion}
In this article, we studied the dueling bandit problem as an instance of the partial monitoring problem. We proved that the binary dueling bandit problem is a locally observable game and hence falls in the easy category of the partial monitoring games. We also looked at the some of the existing partial monitoring algorithms and their optimality with respect to both time and the number of actions.


\newpage

\appendix

\begin{table}[h]
\begin{center}
\caption{Notation table}
\label{tab:notation}
\begin{tabular}{c@{\quad}l@{\quad}l}
\bf Notation & \bf Description \\  \Xhline{4\arrayrulewidth}
$K$ & Number of arms & \\ \hline
$t$ & Time index & \\ \hline
$T$ & Time horizon & \\ \hline
$R_T$ & Cumulative regret after time $T$ & \\  \hline
$\mathds{E}_{\sim{}\pi}(\ldots)$ & Expectation according to $\pi$ &\\ \hline
$\boldsymbol{N}$ & set of actions & \\ \hline
$\boldsymbol{M}$ & set of outcomes & \\ \hline
$\boldsymbol{m}$ & outcome vector $\in \boldsymbol{M}$ & \\ \hline
$\mathcal{L}$ & loss function/matrix & \\ \hline
$\mathcal{G}$ & gain function/matrix & \\ \hline
$\mathcal{H}$ & feedback function/matrix & \\ \hline
$\boldsymbol{l}_i$ & loss vector: column vector consisting of $i^{th}$ row in $\mathcal{L}$ & \\ \hline
$\boldsymbol{g}_i$ & gain vector: column vector consisting of $i^{th}$ row in $\mathcal{G}$ & \\ \hline
$C_i$ & Cell of action $i$\\ \hline
$|.|$ & size of set . & \\ \hline
$\mathbb{R}$ & Set of real numbers \\ \hline
\end{tabular}
\end{center}
\end{table}

\bibliography{references}
\end{document}